\documentclass[10pt]{article} 

\usepackage[preprint]{rlj} 

\usepackage{amssymb}            
\usepackage{amsthm}             
\usepackage{mathtools}          
\usepackage{mathrsfs}           
\usepackage{graphicx}           
\usepackage{subcaption}         
\usepackage[space]{grffile}     
\usepackage{url}                
\usepackage{tabularx}           
\usepackage{float}              
\usepackage{caption}            
\usepackage{tikz}               
\usetikzlibrary{shapes.geometric, arrows.meta, positioning, decorations.markings, backgrounds, fit}
\usepackage{algorithm}          
\usepackage{algpseudocode}      
\usepackage{adjustbox}          
\usepackage{threeparttable}     

\usepackage[normalem]{ulem}  

\newtheorem{theorem}{Theorem}
\newtheorem{lemma}{Lemma}
\newtheorem{definition}{Definition}

\newcommand{\added}[1]{\textcolor{blue}{#1}}
\newcommand{\deleted}[1]{\textcolor{red}{\sout{#1}}}

\title{A Framework for Designing Reward Functions: \\From Objectives to Features to Human-Aligned Reward Functions}

\setrunningtitle{A Framework for Designing Reward Functions}

\author{Di Yang Shi\textsuperscript{1}, W. Bradley Knox\textsuperscript{1}}
\par 

\emails{\{dyshi, bradknox\}@cs.utexas.edu}

\affiliations{
$^{1}$\textbf{University of Texas at Austin}\\
}

\contribution{
    The first reward-design framework that formalizes all three steps from outcome
    variables to a fitted linear reward: (i) a polynomial-time exact algorithm for
    causal reward term selection via reduction to min-cut; (ii) a preference
    elicitation procedure that maintains a provably conflict-free feasible weight
    region by construction, inheriting $O(n \log \kappa)$ query bounds from
    separation oracle methods; and (iii) a guided workflow connecting
    natural-language task descriptions to measurable outcome variables. Together
    these address redundancy, reward hacking, and preference misalignment in a
    unified pipeline.
    }
    {
    The cost of outcome variables and the causal relationships between them are
    assumed to be determinable, and the quality of the resulting reward function
    depends on the completeness of the initial objective elicitation which remains
    a manual process. Optimality of the weight-fitting procedure assumes a
    preference oracle that is consistent with a linear utility; in practice, human
    judgments may be noisy, unreliable, or unavailable altogether.
    }
\keywords{reward design, reinforcement learning, human-aligned reward functions, preference learning, convex optimization.}

\summary{We present a formal process to enable non-experts to instantiate and iterate on human-aligned reward functions, i.e. reward functions that adhere to a given preference ordering over trajectories. Given a task described in natural language, our process produces a linear reward function in three steps: distill the task's objectives into a set of fundamental objectives and derive measurable outcome variables that capture those fundamental objectives, select a causally representative subset of outcome variables as the reward terms, and fit weights to those reward terms via preference elicitation. Our contributions formalize the last two steps. The first is a guided workflow for deriving outcome variables. The second is a reduction of reward term selection to minimum-cost partial cover on a causal DAG, solved in polynomial time via max-flow. The third is a geometric framing of weight fitting as a convex feasibility problem iteratively narrowed by preference queries, solved by existing separation oracle methods. To the best of our knowledge, this is the first reward-design method that maintains a deterministically conflict-free feasible weight region, narrowed to a desired tolerance via a separation oracle with $O(n \log \kappa)$ preference queries.
}

\begin{document}

\maketitle  

\begin{abstract}
We present a formal process to enable non-experts to instantiate and iterate on human-aligned reward functions, i.e. reward functions that adhere to a given preference ordering over trajectories. Given a task described in natural language, our process produces a linear reward function in three steps: distill the task's objectives into a set of fundamental objectives and derive measurable outcome variables that capture those fundamental objectives, select a causally representative subset of outcome variables as the reward terms, and fit weights to those reward terms via preference elicitation. Our contributions describe the first step and formalize the latter two steps. The first is a guided workflow for deriving outcome variables. The second is a reduction of reward term selection to minimum-cost partial cover on a causal DAG, solved in polynomial time via max-flow. The third is a geometric framing of weight fitting as a convex feasibility problem iteratively narrowed by preference queries, solved by existing separation oracle methods. To the best of our knowledge, this is the first reward-design method that maintains a deterministically conflict-free feasible weight region, narrowed to a desired tolerance via a separation oracle with $O(n \log \kappa)$ preference queries.
\end{abstract}

\section{Introduction}
In reinforcement learning (RL) the design of a human-aligned reward function---one that faithfully reflects intended preferences over trajectories---is fundamental to obtaining policies that behave as desired, yet remains a process largely restricted to experts. 
Even with expert RL practitioners, reward functions are commonly derived through trial-and-error processes \citep{knox2023reward, booth2023perils} and still fall victim to common pitfalls such as reward hacking, redundancy in reward terms, and mismatched preferences in behavior from those of humans \citep{knox2023reward}.

We propose a framework (Figure~\ref{fig:reward-pipeline}) that takes as input a task described in natural language and produces a human-aligned linear reward function. The framework consists of three steps: (i) distill the task description into a set of fundamental objectives and derive measurable outcome variables that capture those fundamental objectives; (ii) select a causally representative, low-cost subset of outcome variables as the reward terms; and (iii) fit weights to those reward terms via preference elicitation.

Our contributions are the specification of these three steps in detail, including formalization of the last two steps. The first contribution (Section 3.1) is a guided workflow for distilling fundamental objectives from an initial task description and deriving measurable outcome variables from them. The second (Section 3.2) assumes that causal relationships between outcome variables are determinable and formalizes the selection of a low-cost, causally representative subset of these variables as a minimum-cost partial cover problem on a DAG, which we solve via reduction to max-flow. This turns a heuristically driven manual process with deduplication inefficiencies to a more formal, algorithmic approach that is optimal by design. The third contribution (Section~\ref{sec:fitting}) fits weights to the selected reward terms. \citet{dulacarnold2019challengesrealworldreinforcementlearning} consider multi-dimensional rewards to be one of the primary challenges in RL, particularly in how to balance tradeoffs between reward terms representing different objectives. We address this problem with an algorithmic approach that precisely queries how these tradeoffs should be made. Reframing the weight determination as a convex optimization problem in narrowing down a feasible weight space, we leverage an existing optimization method to analytically determine the best trajectory pairs to query. Such pairs correspond to a hyperplane that cuts the feasible region in half. Accordingly, our reframed approach inherits asymptotically optimal preference querying efficiency and likely-optimal runtime \citep{jiang_volumetric_bound}.

\begin{figure}[H]
    \centering
    \makebox[\textwidth][c]{\includegraphics[width=1.03\textwidth]{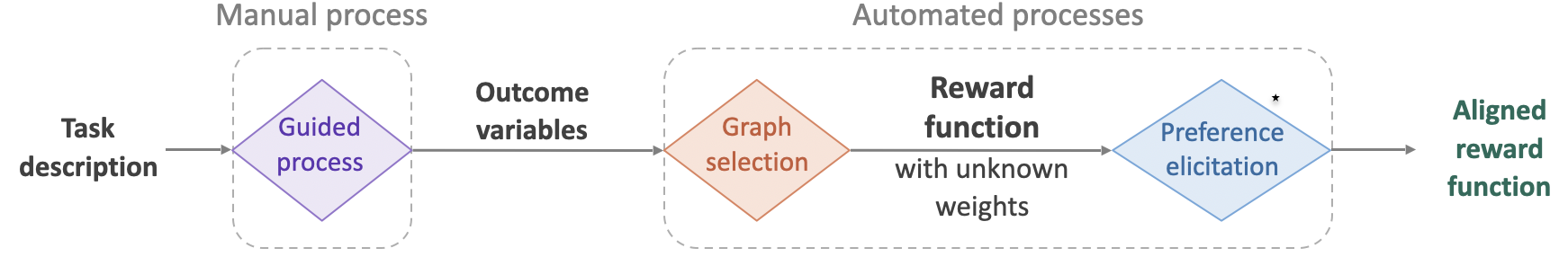}}
    \caption{An outline of our process with the colored diamonds as our contributions (sections 3.1, 3.2, and 3.3 respectively). The ``preference elicitation'' node has the caveat that it is dependent on an oracle which may be a human expert.}
    \label{fig:reward-pipeline}
\end{figure}

\section{Related Work}

We organize related work by the step of our framework that it addresses.

{\textbf{Objective elicitation and outcome variable derivation.}} Our approach to selecting fundamental objectives, a rather subjective and contextual problem, is inspired by \citet{keeney1992}'s work which advocates for first focusing on the values we strive for rather than the alternatives which are available to us and then exploring options based upon those defined values.

{\textbf{Reward term selection and representation.}} A common approach to alignment is reward shaping, the inclusion of additional reward terms towards shaping intermediary learning of policy behavior \citep{ng1999piu, wiewiora2003principled}. However, these approaches tend to fall prey to \citet{sutton2019bitter}'s bitter lesson, leaking biases towards how humans think a task should be done rather than motivating the outcomes we intend. Recent empirical studies have highlighted these shortcomings in designing reward functions even for experienced RL practitioners \citep{knox2023reward,booth2023perils}.

{\textbf{Weight fitting via preference elicitation.}} Reinforcement learning from human feedback (RLHF) has become popularized \citep{christiano2023deepreinforcementlearninghuman, ouyang2022traininglanguagemodelsfollow} but the querying process is typically Bayesian or unspecified altogether. In contrast, our querying method is deterministic with concrete bounds. A paper under review by \citet{stephane2026} explores interactive LLM-facilitated reward design with a human preference annotator; we hope to deploy our method with a similar medium but focus solely on our method's process and theory in this work. Our work is perhaps closest to \citet{sadigh2017active}'s setting in active preference learning in that it uses active preference querying using synthesized trajectories and predicates efficiency on volume removal of the feasible space. However, instead of assuming a feature function we offer a process to build one that's designed towards addressing aforementioned issues such as redundancy in reward terms and mismatch in preferences between the policy and what's intended. We also employ analytic approaches to calculate optimal cutting planes rather than sampling multiple cuts for empirical estimates of volume removal over a distribution. This further allows us to output definitive weights rather than probabilistic ones.

\section{Method}

This section presents our three contributions, which correspond to the labeled steps in Figure~\ref{fig:reward-pipeline}. Section 3.1 derives a set of candidate outcome variables from a task description. Section 3.2 selects a causally representative, low-cost subset of those candidates as the reward terms. Section~\ref{sec:fitting} fits weights to the resulting linear reward representation, producing a polytope of weight vectors consistent with elicited preferences.

\subsection{Selecting Outcome Variables from Task Objectives}

This section derives a set of candidate outcome variables from a task description in two steps. Section~\ref{sec:objectives} distills the task's initial objectives into a set of fundamental objectives. Section 3.1.2 then derives measurable outcome variables that serve as proxies for those fundamental objectives. The resulting candidate set is filtered in Section 3.2.

\subsubsection{Distilling Task Objectives Into Fundamental Objectives}
\label{sec:objectives}
This step takes a natural-language description of the task as input and produces a set of fundamental objectives via an iterative procedure. For a given task, we start by exhaustively listing objectives of the task. Then, we scrutinize each objective in this list asking why each one is important. The answers may elucidate more objectives, progressing towards more fundamental ones with each iteration. We repeat this process of iteratively conceptualizing task objectives and distilling them into more fundamental ones until the fundamental objectives satisfactorily cover the task objectives. It may be the case that an initial task objective is already fundamental and can be left as-is. We provide some examples in Table~\ref{tab:fund_to_init}.

\begin{table}[H]
\centering
\begin{tabularx}{\textwidth}{|X|X|}
\hline
\textbf{Initial Objectives} & \textbf{Fundamental Objectives} \\
\hline \hline
avoid traffic, avoid construction & minimize time \\
\hline
avoid tolls, take fuel efficient routes & minimize cost \\
\hline
drive safely, drive smoothly & minimize collisions, minimize passenger discomfort \\
\hline
\end{tabularx}
\caption{Example fundamental objectives derived from initial objectives}
\label{tab:fund_to_init}
\end{table}

\subsubsection{Deriving Measurable Outcome Variables from Objectives}
Fundamental objectives such as ``minimize passenger discomfort'' or ``maximize happiness'' are typically not directly measurable, so we cannot reward the agent for them as stated. We therefore introduce measurable proxies that we call \emph{outcome variables}. In this subsection we produce a candidate set of outcome variables, with one or more per fundamental objective.

We say that a set of outcome variables \textit{captures} a fundamental objective if there is some operation on the output of their sampling which sufficiently approximates our alignment to a fundamental objective. For each fundamental objective, we assume that there is some proxy set of measurable outcome variables that capture it; if this is not the case, one may consider redefining the fundamental objectives. We show how the previous fundamental objectives might translate to outcome variables in Table~\ref{tab:fund_to_outcome} and describe some possible issues and potential remedies in Table~\ref{tab:shortcomings}.

\begin{table}[H]
\centering
\begin{tabularx}{\textwidth}{|X|X|}
\hline
\textbf{Fundamental Objectives} & \textbf{Outcome Variables} \\
\hline \hline
minimize time & trip duration(seconds) \\
\hline
minimize cost & trip cost(\$)\\
\hline
minimize collisions & peak acceleration / jerk ($m/s^2$ or $m/s^3$) \\
\hline
minimize passenger discomfort & passenger satisfaction($\text{scalar value}$) \\
\hline
\end{tabularx}
\caption{Converting fundamental objectives to measurable outcome variables}
\label{tab:fund_to_outcome}
\end{table}

\begin{table}[H]
\centering
\begin{tabular}{p{0.3\textwidth} p{0.6\textwidth}}
\hline
\textbf{Potential Issues} & \textbf{Remedy} \\
\hline
No measurable proxy exists & Decompose the objective into more concrete sub-objectives for which proxies exist. \\[3pt]
Proxies not observable during training & Substitute with causally upstream variables observable at training time. \\[3pt]
Proxies are gameable & Diversify the proxy set so that simultaneously
gaming all proxies requires behavior that approximates the fundamental
objective. \\[3pt]
\hline
\end{tabular}
\caption{Shortcomings and remedies in identifying measurable proxies for fundamental objectives.}
\label{tab:shortcomings}
\end{table}

\subsection{A Causally-Aware Outcome Variable Selection Framework}

The candidate set produced by Section 3.1 typically contains more outcome variables than we want to include in the reward function. Outcome variables have some cost (selection, measurement, quality, etc.), and many candidates are redundant under causality: a causally downstream variable reflects the effects of its upstream variables, so measuring the upstream variable can substitute for measuring the downstream ones. This subsection formalizes the resulting selection problem on the causal DAG of outcome variables, presents an exact algorithm via reduction to max-flow, and gives a locally greedy alternative for cases where global cost or causal information is incomplete.

We propose the selection process of outcome variables as a graph algorithm. We denote a \textit{causal relationship} between two nodes $a,b$ as $a\rightarrow b$ if $a$ has a causal effect on $b$. We assume causal relationships are determinable; depending on the domain, these causal relationships may be provided by human experts or possibly even automated. Let $O$ be a set of outcome variables representing the fundamental objectives we derived and $C$ be the set of causal relationships between them. In general, causality cannot be cyclical but this fact does not preclude feedback cycles in causality through a temporal dimension. Thus, we assert that $C$ is \textit{temporally-fixed}, that is, for any subset of $C$ which forms a cycle which necessarily transcends time we effectively cut that cycle and fix a temporal segment to consider. In practice, these chicken-and-egg decisions depend on what outcome variables are first measurable in the task at hand. For example, both minimizing trip time and minimizing collisions causally influence each other; when a cycle like this arises, we fix a temporal ordering based on what is first measurable (here, trip time), eliminating the reverse edge.

\subsubsection{Problem Definition}
We formalize the selection problem as a minimum-cost cover on the causal DAG of outcome variables, where the demand nodes are the candidates we wish to account for in the reward.

Let $V=O$, $E=C$, and $G = (V, E)$ be a directed acyclic graph with node costs $c : V \to \mathbb{Q}_{\geq 0}$, $D \subseteq V$ be a set of \emph{demand nodes}, and $\mathrm{src}(G)$ denote the set of source nodes of $G$ (i.e., nodes with in-degree zero). A set $S \subseteq V$ is a
\emph{valid cover} of $D$ if every node in $D$ is covered, where coverage is defined
recursively as follows.

\begin{definition}[Coverage]
Given a selection $S \subseteq V$, a node $v \in V$ is \emph{covered} by $S$ if either
$v \in S$, or $v \notin \mathrm{src}(G)$ and every parent of $v$ is covered by $S$.
\end{definition}

The \textbf{Minimum Cost Partial Cover} (MCPC) problem asks for a valid cover $S^*$ minimizing
$\sum_{v \in S^*} c(v)$. We show that the MCPC problem can be reduced to the max-flow min-cut \citep{ford1956maxflow} problem in \hyperref[sec:appendix_reduction_of_minimum_cost_partial_cover]{\textbf{Appendix: Reduction of Minimum Cost Partial Cover}}.

\subsubsection{Reduction to Minimum Cut}

Reducing MCPC to a standard $s$-$t$ flow problem yields polynomial-time solvability via well-known max-flow algorithms. By Theorem~\ref{thm:equiv}, the MCPC equals the minimum weight vertex cut separating all sources from all demand nodes. We reduce this to a standard minimum $s$-$t$ cut problem via node splitting. We detail the construction of the MCPC problem as a flow network below.

\begin{definition}[Minimum Cost Partial Cover Flow Network]
\label{def:flow}
Given $G = (V, E)$, costs $c$, and demand set $D$, construct a flow network
$\mathcal{N} = (V', E', \sigma, \tau)$ as follows.
\begin{itemize}
    \item For each node $v \in V$: add nodes $v^{in}$ and $v^{out}$ with a directed
          edge $v^{in} \to v^{out}$ of capacity $c(v)$.
    \item For each edge $(u, v) \in E$: add a directed edge $u^{out} \to v^{in}$
          of capacity $\infty$.
    \item Add a supersource $\sigma$ and for each source $r \in \mathrm{src}(G)$:
          add edge $\sigma \to r^{in}$ of capacity $\infty$.
    \item Add a supersink $\tau$ and for each demand node $d \in D$:
          add edge $d^{out} \to \tau$ of capacity $\infty$.
\end{itemize}
\end{definition}

\subsubsection{Algorithm}

We now present \hyperref[alg:mincost]{Algorithm~\ref*{alg:mincost}} for selecting a causal cover of outcome variables from a DAG of their causal relationships.

\begin{algorithm}[H]
\caption{Minimum Cost Partial Cover As Max Flow Min Cut}
\label{alg:mincost}
\begin{algorithmic}[1]
\Require DAG $G = (V, E)$, costs $c : V \to \mathbb{Q}_{\geq 0}$, demand set $D \subseteq V$
\Ensure Minimum cost set $S^* \subseteq V$ covering all nodes in $D$
\State Construct flow network $\mathcal{N}$ as in Definition~\ref{def:flow}
\State Compute maximum $\sigma$-$\tau$ flow $f^*$ in $\mathcal{N}$ using a max-flow algorithm
\State Let $R$ be the set of nodes reachable from $\sigma$ in the residual graph of $f^*$
\State \Return $S^* \gets \{v \in V : v^{in} \in R \text{ and } v^{out} \notin R\}$
\end{algorithmic}
\end{algorithm}

\subsubsection{Runtime Complexity}
The runtime is bound by the max-flow procedure. Two well-known methods are Edmonds-Karp \citep{edmondskarp1972}, with complexity $O(|V'||E'|^2)$, and Dinic's algorithm \citep{dinic1970}, with complexity $O(|V'|^2|E'|)$, where $|V'| = 2|V| + 2$ and $|E'| = |V| + |E| + |\mathrm{src}(G)| + |D|$.

\subsubsection{Locally Greedy Approach}
In practice, the global structure assumed by the exact algorithm may not be available; the cost $c(v)$ of a node may not reduce to a single comparable metric or may be unknown, the causal relationships between some pairs may be unclear, or a practitioner may already have a proposed cover $S$ in mind. In such cases, rather than solving for the globally optimal $S^*$, one may instead verify and refine $S$ relative to a cost budget $B$.
\begin{enumerate}
    \item Select a subset $S$ of demand nodes that best represents the intended learning objectives. One may begin with $S=D$ if no expert selection is available.

    \item If $\sum_{v \in S} c(v) \leq B$, the proposed cover is within budget and we are done. Otherwise, for each $s \in S$, traverse upward through the ancestors of $s$ in $G$, replacing costly nodes in $S$ with lower-cost ancestors where possible. We note this is a heuristic: the ancestor sets of distinct nodes in $S$ may overlap, causing costs to be double-counted relative to the optimal $S^*$.

    \item If $S$ does not provide sufficient learning signal after refinement, additional reward shaping terms may be introduced from higher in $G$, subject to the remaining budget $B - \sum_{v \in S} c(v)$.
\end{enumerate}

\subsection{Fitting the Reward Function's Parameters to Preferences}
\label{sec:fitting}

Section 3.2 produced a set of reward terms. To complete the reward function we need to (a) decide how those terms combine into a scalar reward and (b) fit the parameters of that combination. We assume a linear combination and justify this briefly below before describing the weight-fitting procedure. In their seminal work on inverse reinforcement learning, \citet{ng_irl} argued that linear reward functions are necessary for computational feasibility, are generally sufficient, cover a representative subset of states, and support constraint relaxation with penalty terms---with nonlinearities captured in the basis functions instead. Linear reward functions also offer interpretability and enable core techniques such as successor features \citep{barreto_successor}; critically, our approach relies on convex optimization concepts (half-spaces, analytic centers, feasible regions) that are only sensible in a linear setting.

We then fit the weights by iteratively shrinking a feasible region in weight space using preference queries. Section 3.3.1 sets up the geometry of this feasible region, and Section 3.3.2 presents the synthetic-trajectory algorithm that proposes preference queries, along with its query complexity and runtime. The output is a polytope of weight vectors consistent with all elicited preferences.

\subsubsection{Geometric Problem Description}
We now describe the geometry that allows us to recast weight fitting as a convex feasibility problem solvable by separation oracle methods.

Let $\textsc{Propose}$ be the function that proposes trajectory pairs for preference evaluation, and $\textsc{Decide}$ be the preference decision function (a human expert, algorithm, oracle, etc.). Given outcome variables selected as reward terms that capture the entire trajectory $x_1,\ldots,x_n$, our goal is to recover the weights $w_1,\ldots,w_n$ that best approximate $\textsc{Decide}$, defining a target reward function $R^*(x) = w^{*\top} x \in \mathbb{R}$, where $w^* = (w_1^*, \ldots, w_n^*)$ is the true weight vector and $x = (x_1, \ldots, x_n)$ is the feature vector for a trajectory. Because the reward is scale-invariant, only the direction of the weight vector matters; we identify the weight vector with a point on $S^{n-1}=\{w \in \mathbb{R}^n : \|w\| = 1\}$, reducing the problem to $n-1$ degrees of freedom.

Each call $\textsc{Propose}(\mathcal{C})$ yields an inequality constraint corresponding to a hyperplane through the origin. Concretely, \textsc{Propose} selects a trajectory pair $(\tau^+, \tau^-)$ with feature vectors $x(\tau^+), x(\tau^-) \in \mathbb{R}^n$; the induced query normal is $q = \Delta x / \|\Delta x\|$ where $\Delta x = x(\tau^+) - x(\tau^-)$. Once \textsc{Decide} returns sign $s \in \{-1,+1\}$, the resulting half-space constraint on the weight vector is $\{w : s\, q^\top w \geq 0\}$. Geometrically, we iteratively refine a convex polyhedral cone, projected onto $S^{n-1}$, to localise $w^*$. Let $\mathcal{C} \subset \mathcal{H}^n$ denote the set of accumulated constraints where $\mathcal{H}^n$ is the set of n-dimensional half-spaces, and $\kappa = nR/\epsilon$ where $R$ is the radius of the initial containing box and $\epsilon$ is the desired error tolerance.

\subsubsection{Synthetic Trajectory Algorithms}

Our primary approach narrows $\mathcal{C}$ via $\mathcal{C} \leftarrow \mathcal{C} \cup \{\textsc{Decide}(\textsc{Propose}(\mathcal{C}))\}$. Trajectory pairs submitted for preference queries are constructed entirely from scratch rather than sampled from a fixed dataset, enabling us to pose arbitrary hypothetical comparisons---analogous to asking ``would you rather'' questions when eliciting another person's values.

This framework is equivalent to the \emph{separation oracle} (SO) setting of \citet{grotschel_so}. The most prominent SO method is the analytic center cutting plane method (ACCPM)~\citep{atkinson_accpm}; the most asymptotically efficient is the volumetric center method~\citep{vaidya_volumetric} (with computational improvements by \citealt{jiang_volumetric_bound}). Both use $O(n \log \kappa)$ oracle calls and differ primarily in how they select the query point: ACCPM uses a distance-based criterion (the analytic center), while the volumetric center uses a volume-based criterion. As an implementation note, we substitute the positive unit hypercube as the initial feasible region and normalize output weights as needed; this implicitly restricts $w \geq 0$, which assumes outcome variables are oriented so that larger values are preferred---if negative weights are required, a standard hypercube $[-1,1]^n$ should be used instead.

In practice, we expect $n$ to be small, so ACCPM is sufficient in most cases; when oracle calls are prohibitively expensive, the volumetric center method is preferred as it achieves greater volume reduction per call~\citep{vaidya_volumetric}. Once the query point is determined by the center computation (see Table~\ref{tab:accpm}), constructing the half-space constraint $\{w : s\,q^\top w \geq 0\}$ and synthesizing a corresponding trajectory pair $(\tau^+, \tau^-)$ with $x(\tau^+) - x(\tau^-) = \lambda q$, where $\lambda > 0$ is chosen to produce outcome magnitudes that are meaningful to the evaluator, requires $O(n)$ additional work. Table~\ref{tab:accpm} summarizes the two methods.

\begin{table}[H]
\centering
\footnotesize
\begin{tabular}{p{0.22\textwidth} c p{0.3\textwidth} p{0.22\textwidth}}
\hline
\textbf{Method} & \textbf{\# Oracle Calls} & \textbf{Runtime} & \textbf{Barrier Function} \\
\hline
Analytic Center \citep{atkinson_accpm} &
$O(n \log\kappa)$ &
$O\!\left(n^{\omega+1} \log^2\kappa + (n\log\kappa)^{2+\omega/2}\right)$ &
$-\sum_{i=1}^{m} \log(b_i - a_i^T x)$ \\[4pt]
Volumetric Center \citep{vaidya_volumetric} &
$O(n\log\kappa)$ &
$O(n^3\log\kappa)$ \citep{jiang_volumetric_bound} &
$-\log \det \sum_{i=1}^{m} \frac{a_i a_i^T}{(b_i - a_i^T x)^2}$ \\
\hline
\end{tabular}
\vspace{2pt}
{\footnotesize $n$: dimension;\quad
$\kappa = nR/\epsilon$;\quad
$\epsilon$: error tolerance;\quad
$R$: radius of initial containing box;\quad
$\omega$: matrix multiplication exponent;\quad
$m = |\mathcal{C}|$}
\caption{Comparison of ACCPM~\citep{atkinson_accpm} and the volumetric center method~\citep{vaidya_volumetric,jiang_volumetric_bound}.}
\label{tab:accpm}
\end{table}

We summarize some key benefits that synthetic trajectory generation afford our method.
\begin{itemize}
    \item \textbf{Consistency.} Because SO methods find a point within the feasible region at each step, the resulting constraint set is guaranteed to be consistent and conflict-free. This avoids distributional assumptions and conflict-resolution procedures that arise when preferences are elicited over a fixed trajectory dataset. Essentially, we never ``ask the same (preference) question twice''.

    \item \textbf{Complexity guarantees.} The convergence bounds of SO methods are predicated on the volume removed from the feasible polyhedron at each step---a quantity we can guarantee when generating trajectories synthetically.

    \item \textbf{Controllability.} Trajectories for preference elicitation can be modified before presentation. For example, numerical outcomes can be rounded to improve human readability, or trajectories can be shifted toward a target distribution to make comparisons easier for non-expert evaluators who may struggle with out-of-distribution hypotheticals.
\end{itemize}

\section{Conclusion}

We have presented a framework that takes a task description and produces a human-aligned linear reward function, along with formalizations and algorithms for three contribution steps: a guided workflow for deriving outcome variables, a reduction of reward term selection to minimum-cost partial cover (solved via max-flow), and a geometric framing of weight fitting as a convex feasibility problem narrowed by preference queries. These steps address three recurring failure modes in reward design: redundancy (mitigated by causal cover), reward hacking (mitigated by grounding reward terms in elicited objectives rather than shaped intermediate behaviors), and preference misalignment (mitigated by producing a consistent feasible weight region by construction). We hope this framework serves as a principled foundation for further work on automated reward design and can be a step towards an interactive system that enables non-RL practitioners to design and iterate on reward functions.

\newpage

\bibliography{refs}
\bibliographystyle{rlj}

\newpage

\appendix

\section{Reduction of Minimum Cost Partial Cover}
\label{sec:appendix_reduction_of_minimum_cost_partial_cover}

\subsection{Preliminaries: Flow Networks \citep{ford1956maxflow}}

A \emph{flow network} is a directed graph $\mathcal{N} = (V', E', \sigma, \tau)$ where
each edge $e \in E'$ has a non-negative \emph{capacity} $\mathrm{cap}(e) \in \mathbb{Q}_{\geq 0} \cup \{\infty\}$,
and two distinguished nodes play special roles.

\begin{definition}[Supersource and Supersink]
The \emph{supersource} $\sigma \in V'$ is a node with no incoming edges, representing
the origin of all flow. The \emph{supersink} $\tau \in V'$ is a node with no outgoing
edges, representing the destination of all flow.
\end{definition}

\begin{definition}[Flow]
A \emph{flow} $f : E' \to \mathbb{R}_{\geq 0}$ is an assignment of values to edges
satisfying: (i) \emph{capacity constraints}: $f(e) \leq \mathrm{cap}(e)$ for all $e \in E'$,
and (ii) \emph{conservation}: for every node $v \notin \{\sigma, \tau\}$, the total
flow into $v$ equals the total flow out of $v$. The \emph{value} of a flow is the
total flow out of $\sigma$.
\end{definition}

\begin{definition}[$s$-$t$ Cut]
A \emph{$\sigma$-$\tau$ cut} is a partition $(A, B)$ of $V'$ with $\sigma \in A$ and
$\tau \in B$. Its \emph{capacity} is the total capacity of edges from $A$ to $B$:
\[
\mathrm{cap}(A, B) = \sum_{\substack{e = (u,v) \\ u \in A,\, v \in B}} \mathrm{cap}(e).
\]
A \emph{minimum cut} is a $\sigma$-$\tau$ cut of minimum capacity.
\end{definition}

\begin{definition}[Residual Graph]
Given a flow $f$ in $\mathcal{N}$, the \emph{residual graph} $\mathcal{N}_f$ has the
same node set $V'$ and contains, for each edge $e = (u,v) \in E'$: a
\emph{forward edge} $(u,v)$ with residual capacity $\mathrm{cap}(e) - f(e)$ if
$f(e) < \mathrm{cap}(e)$, and a \emph{backward edge} $(v,u)$ with residual capacity
$f(e)$ if $f(e) > 0$.
\end{definition}

\begin{theorem}[Max-Flow Min-Cut]
In any flow network, the maximum value of a $\sigma$-$\tau$ flow equals the minimum
capacity of a $\sigma$-$\tau$ cut. Furthermore, a flow $f^*$ is maximum if and only
if $\tau$ is not reachable from $\sigma$ in the residual graph $\mathcal{N}_{f^*}$.
In this case, setting $A$ to be the nodes reachable from $\sigma$ in $\mathcal{N}_{f^*}$
and $B = V' \setminus A$ yields a minimum cut $(A, B)$.
\end{theorem}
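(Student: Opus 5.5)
The plan is the classical Ford--Fulkerson argument: weak duality, an augmenting-path characterization of maximum flows, and an explicit cut built from residual reachability. Throughout I take the network to be finite with rational capacities. For a flow $f$ and any $\sigma$-$\tau$ cut $(A,B)$, I would first show that the value of $f$ equals the net flow across the cut:
\[
|f| = \sum_{e\in A\to B} f(e) - \sum_{e\in B\to A} f(e).
\]
This follows by summing the conservation constraints over all nodes of $A$ other than $\sigma$ and adding the out-flow of $\sigma$; internal edges cancel. Since flows are nonnegative and bounded by capacity, this gives $|f|\le \mathrm{cap}(A,B)$, so $\max|f| \le \min\mathrm{cap}$.

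Next I prove the residual characterization. Suppose $\tau$ is reachable from $\sigma$ in $\mathcal{N}_f$. Take a simple path and let $\delta>0$ be the minimum residual capacity along it. Push $\delta$ forward on forward edges and cancel $\delta$ on backward edges. Capacity and nonnegativity are preserved, and conservation holds at every interior node, since each interior node gains and loses exactly $\delta$ in its net balance. The value rises by $\delta$, because $\sigma$ has no incoming edges, so the first edge is a forward edge out of $\sigma$. Hence a maximum flow admits no augmenting path.

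Conversely, suppose $\tau$ is not reachable. Let $A$ be the set of nodes reachable from $\sigma$ in $\mathcal{N}_f$ and $B=V'\setminus A$. Every edge from $A$ to $B$ is saturated, since otherwise its forward residual edge would extend reachability. Every edge from $B$ to $A$ carries zero flow, since otherwise its backward residual edge would do so. The cut identity then gives $|f|=\mathrm{cap}(A,B)$. Combined with weak duality, $f$ is maximum and $(A,B)$ is a minimum cut. This proves the "if" direction and the final claim together.

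It remains to show that a maximum flow exists, so that the two optima actually coincide. This is the main obstacle, for two reasons. First, augmenting paths need not terminate with irrational capacities, although here capacities lie in $\mathbb{Q}_{\ge 0}\cup\{\infty\}$. Second, infinite capacities arise in our network $\mathcal{N}$.

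I would handle the second issue first by a case split. If some $\sigma$-$\tau$ path consists only of $\infty$ edges, then every cut has infinite capacity, and flow can be pushed along that path without bound, so both sides equal $\infty$. The theorem holds in the extended sense, with residual reachability persisting forever.

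Otherwise some cut has finite capacity, and there is at least one cut because $\sigma \ne \tau$. In that case every flow value is bounded by a finite cut capacity. I would then replace each $\infty$ capacity by a constant $M$ exceeding the sum of all finite capacities. This does not change which cuts are minimal, and it lets augmenting flows be bounded as before.

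After scaling the rationals to integers, each augmentation raises the value by at least $1$. The process therefore terminates at a flow with no augmenting path, which by the previous paragraph is maximum and equal to the min cut. Alternatively, one can note that the set of feasible flows is a compact polytope and argue via the finite-case LP.
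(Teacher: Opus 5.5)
The paper does not prove this statement: it quotes it as a standard preliminary with a citation to Ford and Fulkerson, and your proposal is exactly that classical argument (cut identity and weak duality, augmenting paths, the residual-reachability cut, and termination via integrality after taming the $\infty$ capacities), which is correct. Two small points to tidy: if every edge of an augmenting path is a forward residual edge of an $\infty$-capacity edge then $\delta=\infty$, so you should push a finite amount such as $\min(\delta,1)$; and your claim that a finite-capacity cut exists when there is no all-$\infty$ $\sigma$-$\tau$ path is justified by taking $A$ to be the set of nodes reachable from $\sigma$ along $\infty$-capacity edges (in the paper's network $\mathcal{N}$ this degenerate case never arises, since every $\sigma$-$\tau$ path crosses some finite edge $v^{in}\to v^{out}$).
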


\subsection{Structural Characterization}

We first show that valid covers are exactly the sets that hit every source-to-demand path.
Let $\mathrm{src}(G)$ denote the set of source nodes of $G$ (nodes with in-degree zero).

\begin{lemma}
\label{lem:equiv}
A set $S \subseteq V$ is a valid cover of $D$ if and only if $S$ intersects every
directed path from any source $r \in \mathrm{src}(G)$ to any demand node $d \in D$.
\end{lemma}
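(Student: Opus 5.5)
We prove the two directions separately, each by induction along a topological order of the DAG $G$. It helps to first restate coverage non-recursively. Since $G$ is acyclic, the recursive definition is well-founded: a node's status depends only on its parents, which precede it topologically. A node $v$ is therefore covered exactly when it is determined to be covered by processing nodes in topological order.

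\textbf{($\Rightarrow$)} Suppose $S$ is a valid cover and, for contradiction, that some directed path $P = (r = v_0, v_1, \ldots, v_k = d)$ with $r \in \mathrm{src}(G)$ and $d \in D$ avoids $S$. We show by induction on $i$ that no $v_i$ is covered.
\begin{itemize}
\item Base case: $v_0 = r$ is a source and $r \notin S$. A source can only be covered by membership in $S$, since the second clause of the definition requires $v \notin \mathrm{src}(G)$. Hence $r$ is uncovered.
\item Inductive step: $v_i \notin S$, so $v_i$ could only be covered if every parent of $v_i$ were covered. But $v_{i-1}$ is a parent of $v_i$ and is uncovered by the inductive hypothesis. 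Hence $v_i$ is uncovered.
\end{itemize}
Thus $d = v_k$ is uncovered, contradicting validity.

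\textbf{($\Leftarrow$)} Suppose $S$ meets every source-to-demand path. We prove the stronger claim: every node $v$ such that all source-to-$v$ paths meet $S$ is covered. We proceed by induction in topological order. Let $v$ be such a node.
\begin{itemize}
\item If $v \in S$, it is covered.
\item If $v \notin S$ and $v$ is a source, the trivial path $(v)$ avoids $S$, which contradicts the hypothesis. So this case cannot occur.
\item Otherwise $v \notin S$ is a non-source. Take any parent $u$ of $v$ and any source-to-$u$ path $Q$. Extending $Q$ by the edge $u \to v$ gives a source-to-$v$ path, which must meet $S$. Since $v \notin S$, the path $Q$ itself meets $S$. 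So $u$ satisfies the hypothesis, and because $u$ precedes $v$ topologically, $u$ is covered by induction. All parents of $v$ are therefore covered, and so $v$ is covered.
\end{itemize}
Applying this claim to each $d \in D$ shows that $S$ is a valid cover.

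The main subtlety is the ($\Leftarrow$) direction. The induction hypothesis must be strengthened from demand nodes to all nodes, and the argument needs two further facts: every node has a path from some source, which holds because the DAG is finite and one can walk backward along parents; and the path-extension step, which uses $v \notin S$. The trivial length-zero path must also be admitted in both directions, so that a demand node which is itself a source is handled correctly.
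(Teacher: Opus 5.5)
Your proof is correct and follows the paper's route: contradiction along a source-to-demand path avoiding $S$ for $(\Rightarrow)$, and for $(\Leftarrow)$ induction along the DAG order (your topological order, their longest-path length) using the extension of a source-to-$u$ path through the edge $u \to v$. Your version also makes explicit two steps the paper glosses over. In $(\Rightarrow)$ you propagate non-coverage along the whole path to $d$, whereas the paper stops after showing $v_0$ is uncovered, which is not a contradiction by itself since $v_0$ need not be a demand node. In $(\Leftarrow)$ you strengthen the inductive claim to all nodes, which is needed because the paper applies its inductive hypothesis to a parent $u$ that need not lie in $D$.
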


\begin{proof}
$(\Rightarrow)$ Suppose $S$ is a valid cover and let $P = v_0 \to v_1 \to \cdots \to v_k$
be a directed path with $v_0 \in \mathrm{src}(G)$ and $v_k \in D$. Suppose for
contradiction that $P \cap S = \emptyset$. Since $v_0$ is a source it has no parents,
so it is covered only if $v_0 \in S$ --- a contradiction. Hence $S$ must hit every
such path.

$(\Leftarrow)$ Suppose $S$ hits every source-to-demand path. We show every $d \in D$
is covered by induction on the length of the longest source-to-$d$ path. For the base
case, if $d$ has no parents then $d \in \mathrm{src}(G)$, and the path of length zero
from $d$ to itself must be hit by $S$, so $d \in S$ and $d$ is covered.

For the inductive step, if $d \notin S$ then consider any parent $u$ of $d$. Every
directed path from a source to $u$ can be extended by the edge $u \to d$ to a
source-to-$d$ path, which $S$ must hit. Since $S$ does not hit $u \to d$ at $d$
(as $d \notin S$), it must hit the path at or before $u$. Hence every source-to-$u$
path is hit by $S$. By the inductive hypothesis $u$ is covered. Since $u$ was
arbitrary, every parent of $d$ is covered, so $d$ is covered.
\end{proof}

\subsection{Reduction to Min-Cut Proof}

\begin{theorem}
\label{thm:equiv}
The minimum cost partial cover of $D$ equals the minimum $\sigma$-$\tau$ cut in
$\mathcal{N}$, and hence equals the maximum $\sigma$-$\tau$ flow in $\mathcal{N}$
by the max-flow min-cut theorem.
\end{theorem}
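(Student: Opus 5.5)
We prove the equality by establishing both inequalities between the optimal MCPC cost and the minimum $\sigma$-$\tau$ cut capacity in $\mathcal{N}$, using Lemma~\ref{lem:equiv} to replace ``valid cover'' by ``set hitting every source-to-demand path''. The equality with the maximum flow then follows directly from the Max-Flow Min-Cut theorem stated above.

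\textbf{Step 1 (cover $\Rightarrow$ cut of equal cost).} Given a valid cover $S$, build the cut explicitly. Let $A$ consist of $\sigma$ together with every split node reachable from $\sigma$ in $\mathcal{N}$ after deleting the edges $v^{in}\to v^{out}$ for $v \in S$, and let $B = V'\setminus A$. We must check $\tau \notin A$. Any $\sigma$-$\tau$ path in $\mathcal{N}$ has the form $\sigma \to r^{in}\to r^{out}\to \cdots \to d^{out}\to\tau$, and it projects to a source-to-demand path $r \to \cdots \to d$ in $G$. By Lemma~\ref{lem:equiv} this path contains some $v\in S$, so the $\sigma$-$\tau$ path uses a deleted edge. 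Hence $(A,B)$ is a valid cut.

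Next we bound its capacity. Every edge leaving $A$ is one of the deleted edges $v^{in}\to v^{out}$ with $v \in S$: any other edge out of a node of $A$ was not deleted, so its head is reachable and lies in $A$. Therefore $\mathrm{cap}(A,B)\le \sum_{v\in S}c(v)$, which gives $\text{min cut}\le \text{MCPC}$.

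\textbf{Step 2 (cut $\Rightarrow$ cover of no greater cost).} Take a minimum cut $(A,B)$. If every node cost is $\infty$ the statement is trivial, so assume some finite cut exists, for instance the one produced by $S=V$ in Step 1. Then $(A,B)$ has finite capacity, so it crosses no $\infty$-capacity edge. The only finite edges in $\mathcal{N}$ are the split edges, so every edge crossing from $A$ to $B$ is of the form $v^{in}\to v^{out}$. Define $S=\{v : v^{in}\in A,\ v^{out}\in B\}$; its cost is exactly $\mathrm{cap}(A,B)$.

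To see that $S$ is a valid cover, take any source-to-demand path $r=v_0\to\cdots\to v_k=d$ in $G$. Its lift $\sigma, v_0^{in}, v_0^{out},\dots,v_k^{out},\tau$ starts in $A$ and ends in $B$, so it must cross from $A$ to $B$ at some edge. That crossing edge must be a split edge $v_i^{in}\to v_i^{out}$, which means $v_i\in S$. By Lemma~\ref{lem:equiv}, $S$ is a valid cover, so $\text{MCPC}\le\text{min cut}$.

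\textbf{Main obstacle.} The delicate step is the finiteness argument in Step 2, which rules out crossings at $\infty$ edges. A related issue is making sure that backward crossings ($B\to A$) do not matter; they do not, because the capacity of a cut counts only edges from $A$ to $B$. I would also note that the set $R$ in Algorithm~\ref{alg:mincost} is exactly such a minimum cut's $A$, which justifies the algorithm's output.
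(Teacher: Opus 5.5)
Your proof is correct and follows the same route as the paper. It uses the same three ingredients: a finite cut can only cross split edges $v^{in}\to v^{out}$, $\sigma$-$\tau$ paths in $\mathcal{N}$ correspond exactly to source-to-demand paths in $G$, and Lemma~\ref{lem:equiv} turns path-hitting into validity of the cover. The only difference is that you prove both inequalities separately. In particular, you build an actual partition cut from a cover by taking the set reachable from $\sigma$ (capacity $\le \sum_{v\in S}c(v)$), a direction the paper compresses into ``the cut severs all such paths iff $S$ hits every source-to-demand path.''
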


\begin{proof}
Every finite-capacity edge in $\mathcal{N}$ has the form $v^{in} \to v^{out}$ for
some $v \in V$, with capacity $c(v)$. A $\sigma$-$\tau$ cut of finite capacity must
consist entirely of such edges. Let $S = \{v \in V : (v^{in} \to v^{out}) \in \mathrm{cut}\}$
be the corresponding set of nodes.

Every directed path from $\sigma$ to $\tau$ in $\mathcal{N}$ has the form
\[
\sigma \to r^{in} \to r^{out} \to \cdots \to d^{in} \to d^{out} \to \tau
\]
for some $r \in \mathrm{src}(G)$ and $d \in D$, and corresponds exactly to a
source-to-demand path in $G$. The cut severs all such paths iff $S$ hits every
source-to-demand path in $G$. By Lemma~\ref{lem:equiv} this is exactly the condition
for $S$ to be a valid cover. The cut capacity equals $\sum_{v \in S} c(v)$, so the
minimum cut corresponds to the minimum cost valid cover.
\end{proof}

\end{document}